\documentclass[11pt]{article}
\usepackage[T1]{fontenc}
\usepackage[utf8]{inputenc}
\usepackage{lmodern}
\usepackage{amsmath,amssymb,amsthm,booktabs}
\usepackage[margin=1in]{geometry}
\usepackage{array,tabularx}
\usepackage{flafter}
\usepackage{fancyvrb}
\usepackage{tikz}
\usetikzlibrary{arrows.meta}
\usepackage{microtype}
\usepackage[colorlinks=true,linkcolor=blue,citecolor=blue,urlcolor=blue]{hyperref}

\makeatletter
\renewcommand{\@cite}[2]{[\if@tempswa #2, \fi #1]}
\makeatother
\newtheorem{theorem}{Theorem}[section]

\newtheorem{proposition}[theorem]{Proposition}

\theoremstyle{definition}
\newtheorem{definition}[theorem]{Definition}
\theoremstyle{remark}

\numberwithin{equation}{section}
\newcommand{\psep}{\par\vskip 1mm}

\newcommand{\TR}{\operatorname{TR}}
\newcommand{\ind}{\mathbf 1}
\newcommand{\Pred}{\operatorname{Pred}}
\newcolumntype{Y}{>{\raggedright\arraybackslash}X}
\hypersetup{pdftitle={Dependency Graphs of Mathematical Arguments: Representation, Structural Measurement, and Proof Review},pdfauthor={Eunji Park},pdfsubject={Mathematical dependencies, argument structure, and mathematical knowledge management}}
\title{Dependency Graphs of Mathematical Arguments:\\
Representation, Structural Measurement, and Proof Review}
\author{Eunji Park\\[0.4em]
{\small Graduate School of Data Science, Seoul National University}\\
{\small Gwanak-ro 1, Gwanak-gu, Seoul 08826, Korea}\\[0.3em]
{\small Email address: \texttt{dmswl4743@snu.ac.kr}}}
\date{}
\begin{document}
\maketitle

\begin{abstract}
We formulate a source-grounded framework for representing and measuring
mathematical arguments with assistance from large language models.
The inventory includes named statements and numbered equations, as well
as unnumbered equations, local assumptions, constructions, and prose
claims needed to represent a selected argument. Source occurrences are
distinguished from contextual states and inference records; proposed
dependencies retain evidence and review status. A measurement contract
fixes manuscript versions, boundaries, granularity, proof choices, and
macro anchors. On a selected finite directed acyclic graph, depth is
the longest prerequisite-path length, and bulk counts distinct ancestor
edges. We give a union recurrence preserving shared subarguments and
establish that fixed-vertex reachability determines depth, whereas
fixed-anchor reachability determines the macro skeleton. Unrestricted
refinement need not preserve depth or bulk.
The principal corpus-level objective is to analyze recurring groups of
jointly used theorems, infrequent combinations, and results connecting
different dependency communities. These patterns motivate probabilistic
premise suggestions and targeted review of possible omissions or
unchecked application conditions in human-written and AI-generated
proofs. Suggested premises require mathematical validation; statistical
association does not certify a proof. Further applications include
source-linked reading support, cross-paper theorem exploration, and
comparison of individual and collaborative argument shapes, with problem
structure and writing roles treated as competing explanations.
Measurements relative to specified axioms additionally require resolved
foundational dependencies and fixed inference conventions. A versioned
prompt protocol and an evaluation design separate annotation discovery,
deterministic recomputation, and review assistance. These applications
are proposed research directions rather than reported empirical findings.
The measures describe a specified representation, not intrinsic
mathematical difficulty, originality, or importance.
\end{abstract}

\noindent\textbf{Keywords.} Mathematical knowledge management; dependency
graphs; argument structure; proof granularity; theorem reuse; premise selection.

\section{Introduction}\label{sec:intro}

\subsection{Historical Background}
The organization of mathematical knowledge has both a logical and an
expository dimension. A result depends on assumptions, definitions, and
earlier arguments, but the order in which a paper presents these ingredients
need not coincide with their dependency order. A calculation in an appendix
may support a theorem in the main text, several results may reuse one
construction, and a paper may contain independent branches. A sequence of
pages therefore provides only one view of the argument.
\psep
Making mathematical structure explicit has a long history. De Bruijn's
Automath language represented mathematical objects and assertions through
explicitly introduced names and categories, with texts designed for
computer checking \cite[Section 1]{deBruijn}. At the level of mathematical
writing, Lamport advocated hierarchical proofs in which named steps expose
the organization of an argument and allow readers to move between an
outline and its details \cite[pp.~601 and 607]{Lamport}. These approaches
address different questions, but both show why the structure supporting a
statement deserves an explicit representation.
\psep
Mathematical knowledge management extends this concern from individual
proofs to reusable bodies of mathematics. Theory graphs, as discussed by
Kohlhase in the setting of OMDoc/MMT, represent mathematical theories and
the interpretations connecting them \cite[Section 3]{Kohlhase}.
Bibliometric networks offer a different perspective: Small's co-citation
analysis records how frequently two documents are cited together
\cite{Small}. Such document-level relations describe the organization of
literature, but they do not specify which theorem, assumption, or
intermediate calculation is used in a particular proof. The present study
is concerned with that finer level, while retaining links to the source
documents and their broader mathematical context.

\subsection{Related Work}
Alama, Mamane, and Urban extract detailed dependencies from Coq and Mizar
and distinguish dependencies relevant to logical justification from those
needed for the successful processing of formal texts
\cite[Introduction and Section 2]{Alama}. This is a direct methodological precedent for
specifying what an edge means. Their work also cautions against treating
all dependencies produced by a software system as interchangeable. Our
setting is primarily informal mathematical writing, where identifying a
dependency additionally requires interpreting the source and recording
the evidence for that interpretation.
\psep
NaturalProofs provides a corpus of natural-language mathematical statements
and proofs, an explicit statement-level reference graph, and
reference-retrieval and reference-generation tasks
\cite[Sections 3--4 and Table 2]{NaturalProofs}. Its reference graph is an important
precedent for representing relations within informal mathematics; the
authors note that it can contain cycles. Its evaluation tasks concern
retrieving and generating references used in a proof, rather than a
complete decomposition of internal calculations or the stability of
depth and bulk under such a decomposition. Reference links can supply
candidate inputs to our framework, but contextual applicability and a
compatible acyclic proof selection still have to be established.
\psep
Lean Blueprint supports author-declared dependencies separately for
statements and proofs and links the exposition to named Lean declarations
\cite[``Editing the blueprint'']{Blueprint}. KnowTeX develops a closely
related approach directly from \LaTeX{} sources, with manual annotations,
rule-based dependency inference, interactive edge review, cycle detection,
and transitive reduction \cite[Sections 2 and 4.3--4.6]{KnowTeX}. These are close
precedents for a Paper Graph, including its use as a reading map. We
therefore do not claim that source-linked dependency visualization,
human review of proposed edges, or educational use of such graphs is new.
KnowTeX also recognizes constructions, remarks, and custom environments,
including items without explicit labels, but excludes equation and align
environments from its node parser \cite[Section 4.2]{KnowTeX}.
Its order-based inference rules propose dependencies requiring separate
review \cite[Section 4.4]{KnowTeX}. Our emphasis is on a measurement
contract that fixes source evidence, decomposition, and the treatment of
shared subarguments and unresolved components.
\psep
TheoremGraph combines extractor-labeled candidate dependencies with
semantic links between related statements within and across informal
and formal mathematical corpora
\cite[Sections 3--6 and 10]{TheoremGraph}. It is a direct
precedent for the cross-paper scope of a Math Graph. The distinction
between a candidate match, a semantic similarity, and an established
application of a theorem is essential here: an application requires the
appropriate statement, context, substitution, and hypotheses. Our proposal
organizes these distinctions for argument measurement; it does not claim
priority for a statement-level mathematical network or for connecting
informal and formal corpora.
\psep
Li, Peng, Severini, and Shafto analyze the network structure of Mathlib,
including its dependency layers, depth, and community structure
\cite[Sections 4--5 and Appendices B--C]{Mathlib}. Their analysis emphasizes
that logical, mathematical, and infrastructural relations can produce
different network interpretations. Its declaration graph points from a
declaration to its premises, whereas our measurement graph points from
inputs to dependent outputs \cite[Appendix B.2.1]{Mathlib}. Node-level
depth and degree comparisons therefore require explicit alignment of
edge conventions. The work is particularly relevant to our proposed
study of structural shape and connections between fields.
The distinction between statement and proof dependencies, and the
warning against removing actual proof-use edges by transitive reduction,
are particularly close to our separation of displayed structure and
edge-count measurements \cite[Appendices B.2.2 and C.2.3]{Mathlib}.
Depth and network centrality are thus established analytical tools, not
new invariants introduced here. Likewise, our macro-level construction
uses the classical transitive reduction of a directed graph
\cite{Aho}.
\psep
Finer representations also have direct precedents. Proofscape places
assertions, assumptions, and definitions from informal proofs on nodes
and supports interactive expansion of inference steps and cited results
\cite[Section 1, pp.~57--58]{KiefferProofscape}.
Mathematical Derivation Graphs studies equation dependencies with nodes
explicitly restricted to numbered equations and evaluates LLM extraction
\cite[Sections 2 and 5.2]{MDG}.
ProofFlow uses an LLM to extract conditions, definitions, intermediate
claims, and conclusions as a proof DAG; its examples include unnumbered
calculations and constructions, and it discusses step granularity
\cite[Sections 3.1 and 4.1, Figures 1--2]{ProofFlow}.
Thus, neither broad proof-step nodes nor LLM-assisted decomposition is
claimed as a contribution in isolation. Our coverage requirement is to
retain argument-bearing unnumbered material throughout the selected
manuscript scope, under an explicit and auditable measurement policy.
\psep
Learning useful premises from proof corpora is likewise established.
Alama et al. combine proof-dependency analysis with learned premise
selection and automated re-verification \cite{AlamaPremise}.
NaturalProofs studies conditional reference generation
\cite[Section 5]{NaturalProofs}, while TheoremGraph uses dependency-neighbor
expansion to supplement retrieval \cite[Appendix E.2]{TheoremGraph}.
These are direct antecedents of corpus-based proof assistance.
Our proposed application retains statistical suggestions as a separate
relation type: only source-supported, contextually reviewed uses enter
the selected measurement graph.
\psep
Taken together, these works supply substantial parts of the surrounding
infrastructure. The contribution pursued here is a unified, source-grounded
protocol for measuring a chosen argument, together with precise conditions
for comparison and a research design for studying the resulting data.
This integration specifies the method being proposed; it does not
establish priority for its individual components or superior extraction
accuracy. Any advantage over existing tools must be established on common
annotated material.

\subsection{Main Question}
Our main question is the following:
\begin{quote}
How can the dependencies actually used in a mathematical paper be
represented and measured reproducibly across LLM-assisted analyses, so
that both its internal argument structure and its connections to other
papers can be compared under explicitly controlled conditions?
\end{quote}
This question has three parts. First, which source occurrences and
intermediate states should be distinguished, and what evidence makes a
proposed dependency admissible? Second, which structural properties are
preserved when a representation is simplified or refined? Third, once
comparable graphs have been collected, which patterns in learning,
exposition, collaboration, and theorem reuse can be investigated without
confusing structural measurements with mathematical value?
\psep
The object being measured is a recorded argument under a specified
representation. It is not a theorem independently of all its proofs. A
single statement may admit several proofs, and the same proof may be
expanded to different levels of detail. Even a fixed graph can support
different questions: a longest prerequisite chain describes sequential
dependence, whereas the union of all prerequisites describes the amount
of recorded structure supporting a target. Neither quantity alone
determines how difficult the argument is to understand or discover.

\subsection{Proposed Framework and Methods}
We use the term \emph{Paper Graph} for a source-linked representation of
definitions, assumptions, constructions, intermediate calculations, and
results in one manuscript. Its data have three levels: document
occurrences preserve what appears in the source; contextual states and
inference records encode a chosen argument; and a macro skeleton records
reachability between selected anchor states. The term \emph{Math Graph}
denotes a collection of such representations connected by identified
theorem applications, with versions and contexts retained.
\psep
Large language models may assist in identifying unnumbered equations,
local assumptions, object constructions, compatibility checks, and claims
expressed in prose, in addition to explicitly labeled items. The inclusion
criterion is an item's role in the recorded justification, rather than
its typography or position. Each proposed item and dependency retains
its source location, context, and evidential status for independent review.
A complete inventory of numbered statements alone is therefore not a
complete representation of the argument.
\psep
The source-linked reading view can keep these items accessible while
presenting a compact overview. A separate computational representation
records the substantive steps needed for measurement; screen layers and
displayed item counts do not determine the structural metrics.
\psep
The central methodological device is a measurement contract
$\mathcal C=(M,B,P,W,H)$. It fixes the manuscript and scope $M$, the accepted
boundary inputs $B$, the granularity policy $P$, a consistent proof
selection $W$, and the macro anchors $H$. Under this contract, a target
$v$ in a finite directed acyclic graph has depth $D(v)$, measured in edges
along a longest incoming path, and bulk $S(v)$, measured by distinct edges
in its ancestor subgraph. Shared subarguments are counted by set union,
not by adding the sizes of branches. We also consider the edge-to-node
ratio $q(G)=|E|/|V|$ as a first summary of dependency per node, while keeping
its interpretation separate from general graph complexity.
\psep
A central design objective is to make the extracted dependency structures
and their measurements as stable as possible across different LLMs.
With the source and measurement contract held fixed, the choice of
extractor should have as little influence as possible on which argument
steps and actual uses are recorded and on the resulting $D$ and $S$.
This objective includes repeated runs of one model and comparisons
between models; it is an empirical requirement to evaluate, not a
guarantee supplied by identical prompts.
\psep
Such comparisons require source- and context-preserving alignment of
states under the same granularity and proof-selection policies.
Agreement on states, dependency edges, anchor relations, and target-level
$D$ and $S$ should be reported separately: equal measurements do not
imply identical graphs, and model agreement does not establish
mathematical validity. Disagreements should remain traceable to the
source for review. Once the same accepted graph is fixed, deterministic
recalculation must give the same measurements independently of the LLM
that proposed its annotations. Sections~\ref{sec:algorithms} and
\ref{sec:evaluation} distinguish this computational requirement from
stability in discovering the graph itself.
\psep
The structural results establish the scope of these definitions.
Proposition~\ref{prop:union} gives the union recurrence;
Propositions~\ref{prop:depth}--\ref{prop:bounds} address reachability,
refinement, and conditional bounds; and Proposition~\ref{prop:macro}
characterizes the macro skeleton. These are elementary consequences of
finite DAG structure. Their role is to justify the measurement protocol,
not to claim new general graph theory. The remaining methods concern
source annotation, explicit uncertainty, independent recalculation, and
evaluation against expert reference data. Section~\ref{sec:prompt} introduces
PGPP~0.1, a proposed standardized prompt protocol that makes these
requirements operational. Its complete reusable prompt and output contract
are given in Appendix~\ref{app:prompt}.

\subsection{Future Directions}
Our principal corpus-level objective is to study the joint use of
mathematical results as evidence for directing proof review. A collection
of reviewed Paper Graphs could reveal theorem groups repeatedly used
together, combinations rarely observed under comparable opportunities
for use, and results connecting different dependency communities.
These patterns concern actual uses in selected arguments, rather than
bibliography-level co-citation alone. Their interpretation must account
for theorem frequency, proof size, subject area, and corpus coverage.
\psep
The central proof-review question is therefore:
\begin{quote}
Given a human-written or AI-generated proof, can reviewed, source-grounded
dependency records and theorem co-use patterns jointly improve the
prioritization of premises, intermediate claims, and theorem applications
that require mathematical checking?
\end{quote}
The proposed input consists of the target statement and proof, its local
dependency annotations with unresolved steps retained, and a separate
corpus of reviewed proof units. Comparable prior arguments could suggest
premise bundles to inspect; bridge nodes could extend the search across
subject communities. The output would be a ranked set of specific checks:
an application whose hypotheses have not been established, an intermediate
claim whose recorded inputs leave its justification unresolved, or a
potentially omitted premise worth investigating. Each recommendation
would identify the local passage, the relevant theorem instance and
conditions, and supporting examples from the corpus.
\psep
The empirical goal is to improve the detection and resolution of actual
proof defects under a fixed review budget. Comparisons should include
unaided review, statement-similarity retrieval, and suggestions based
only on theorem frequency, with the same source access and candidate
availability. Outcomes include independently adjudicated defects found
and resolved, false alarms, and reviewer effort. A common pattern does
not certify a proof, and an unusual pattern need not be incorrect.
Suggested premises must be checked against the statement, substitution,
and hypotheses; a proposed repair remains separate from the original
source graph until a revised proof is justified. Section~\ref{sec:applications}
specifies the prediction task, and Section~\ref{sec:evaluation} describes
how to test whether these suggestions improve mathematical review.
\psep
A first practical application is as a companion to reading a paper.
A reader could select a result, inspect all its prerequisites, distinguish
familiar inputs from unfamiliar ones, and return from a graph item to
the source. Whether this improves understanding requires evaluation;
logical dependency order need not be the best teaching order.
\psep
Verified theorem applications would connect individual Paper Graphs
into a larger Math Graph. Expanding external results could eventually
support depth relative to specified axioms and inference rules, provided
the necessary foundations, definitions, versions, and proof choices have
been connected explicitly. Otherwise depth remains relative to the
declared boundary of the current analysis.
\psep
The data also permit hypotheses about recurring argument shapes in
single-author and collaborative work. A collaboration may combine
distinct approaches or reflect the person organizing the exposition.
The mathematical problem itself may impose much of the shape. Comparisons
must therefore separate, where possible, author-associated organization,
writing roles, subject matter, and problem structure. They concern
recorded arguments, not personal character or mathematical ability.
\psep
Sections~\ref{sec:mathgraph}--\ref{sec:applications} specify these proposed
applications, and Section~\ref{sec:evaluation} gives their evaluation
design. No corpus-level patterns or benefits are reported as findings.
\psep
To help readers try the framework on a small selection of papers and
obtain a visual overview, we provide two complementary prompts for
practical convenience. The PGPP~0.1 reference prompt, introduced in
Section~\ref{sec:prompt} and reproduced in Appendix~\ref{app:prompt},
guides the construction of source-linked argument records. The
supplementary website prompt in Section~\ref{sec:viewer-prompt} uses
the resulting graph data to build an interactive reading view, with
complete prerequisite highlighting and persistent click selection.
Readers can first apply the analysis prompt to their selected manuscripts
and then supply the graph records to the interface prompt to explore
the results visually. This provides a practical route from a few source
papers to an inspectable map; rapid visualization does not itself
establish completeness or correctness, and annotation status and
unresolved items should remain visible.

\subsection*{Disclosure of AI assistance}
The original research ideas and conceptual direction of this paper
originated with the author. GPT-6 Astra was used in developing the
mathematical details and proofs that make these ideas precise.
GPT-5.6 Sol was used extensively in drafting and revising the manuscript,
improving the exposition, and correcting grammar. Responsibility for the
final content, including the mathematical statements, proofs, and accuracy
of the references, rests with the author.

\section{Representation and Data Model}\label{sec:model}

\subsection{Source occurrences and contextual states}
A source occurrence is a particular item in a particular manuscript
version. The inventory includes labeled definitions, assumptions, lemmas,
propositions, theorems, corollaries, and numbered equations. Unnumbered
equations, local assumptions, constructions, calculations, and prose
claims are included when they play a role in the selected argument. Each occurrence retains its text or expression,
item type, label, and source location. Document order is metadata rather
than a dependency rule.
\psep
A computational state is a claim or expression in a context. A state $v$
records a context $\Gamma_v$, a claim or expression $\varphi_v$, object
identifiers, parameter substitutions, and relevant conventions. Identical
printed formulas in different contexts need not represent the same state.
Conversely, exact reprints of the same calculation may be distinct source
occurrences pointing to one state.
\psep
The correspondence between occurrences and states is a relation, not
necessarily a bijection. One displayed equation may contain several
subcalculations, and one state may be displayed more than once. A component
of a displayed tuple or family is not identical to the entire family.
Its correspondence must be recorded separately, or a component-extraction
step must be introduced. This distinction preserves the source while
making any sharing decision auditable.

\subsection{Actual use and inference records}
An edge $u\to v$ means that $u$ is used as an input in the selected
justification, definition, or construction of $v$. It does not assert that
every possible proof of $v$ must use $u$. Citations, semantic similarities,
equivalences, and layout connections have separate relation types.
\psep
Pairwise edges alone do not record the difference between joint premises
and alternative proofs. We therefore retain inference records
\begin{equation}\label{eq:inference}
 r=(I_r,o_r,\tau_r,\theta_r,\varepsilon_r,w_r).
\end{equation}
Here $I_r$ is a finite set of joint input states, $o_r$ is the output,
$\tau_r$ is an operation type, $\theta_r$ specifies context and substitution,
$\varepsilon_r$ contains source evidence and review information, and $w_r$
identifies a proof alternative. Inputs within one record have an AND
interpretation. Alternative derivations of the same output have an OR
interpretation and are not combined into one set of required premises.
\psep
For measurement, a proof selection $W$ chooses one compatible generating
record for each derived state in the analyzed argument; accepted boundary
states have no generating record. Compatibility requires agreement on
shared states, contexts, and any local choices used by the selected
records. Incompatible derivations must be kept as separate proof instances.
The selected records induce a simple directed graph by placing $u\to o_r$
for each $u\in I_r$. The records themselves are retained, since this
projection does not recover their full semantics.
\psep
Repeated appearances of one input in a single record can be stored as
additional metadata, but the basic graph contains each ordered pair only
once. The depth of this state graph is not the depth of an incidence graph
with extra operation vertices. Switching between those representations
requires an explicit change of measurement policy.

\subsection{Boundaries and acyclicity}
The boundary $B$ specifies the assumptions, primitives, external results,
or unresolved inputs accepted as starting points. Giving an external
theorem depth zero means that its proof is outside the current expansion,
not that the theorem is elementary. A block whose inputs and output are
known but whose interior has not been decomposed remains an explicit
opaque block.
\psep
We restrict the measurement graph to a finite DAG. If a cycle appears,
one must first check whether assumptions and conclusions were conflated,
equivalence links were treated as dependencies, or different proof
alternatives or manuscript versions were mixed. Induction is represented
through a rule with its local assumptions and their discharge, rather
than by a self-dependent theorem node. Recursive definitions similarly
require their justification conditions. An unresolved cycle is reported
as a representation problem; edges are not silently removed to obtain a
numerical answer.

\section{A Source-Grounded Measurement Protocol}\label{sec:protocol}

\subsection{Fixing the measurement contract}
The contract $\mathcal C=(M,B,P,W,H)$ fixes the conditions under which
graphs and measurements can be compared. Depth and bulk depend on the
first four components; the macro skeleton additionally depends on $H$.
The manuscript, annotation set, operation registry, and calculation code
should have version identifiers and content hashes.
\psep
The granularity policy $P$ specifies what counts as a substantive step.
It does not claim to identify a universal smallest unit of informal
mathematics. Its purpose is to make a choice repeatable within a stated
domain and analytical objective. A domain-specific extension receives a
new version. Measurements obtained under different registries should not
be treated as directly comparable without a correspondence analysis.

\subsection{Substantive decomposition}
A step boundary is justified when an output is used by another inference,
requires a separate justification, or is identified by a registered
operation rule. Independent calculations remain parallel. A sequential
edge is inserted only when one step consumes the output of another.
Line breaks and equation numbers alone do not define computational steps.
Table~\ref{tab:operations} gives a starting registry.

\begin{table}[htbp]
\centering\small
\caption{An illustrative operation registry. Each entry must be specialized
by its allowed inputs, outputs, and context.}\label{tab:operations}
\renewcommand{\arraystretch}{1.15}
\begin{tabularx}{\textwidth}{@{}p{0.17\textwidth}YY@{}}
\toprule
Operation & One recorded step & Distinction to preserve\\
\midrule
SUB & A specified simultaneous substitution & Do not count one step per substituted symbol.\\
APPLY & An application of a theorem or identity & Fix the instance and its hypotheses; mere mention is insufficient.\\
EXPAND & Expansion under a registered rule & Printed length is not the number of steps.\\
COLLECT & Collection or cancellation in a fixed normal form & Do not count the same normalization twice.\\
COMPONENT & Extraction of a used component or coefficient & A component is not the entire expression family.\\
CONSTRUCT & Construction of an object or map & Retain its defining conditions and source evidence.\\
COMPAT & A separately justified compatibility condition & A declaration is not an established condition.\\
COMBINE & An inference combining established outputs & Renaming or reprinting is not a new combination.\\
OPAQUE & A block with recorded inputs and output & The unresolved interior is not reconstructed by assumption.\\
\bottomrule
\end{tabularx}
\end{table}

Each decomposed step records its location, inputs, output, reason for
separation, and evidence status. Explicit source calculations are
distinguished from interpretive decompositions grounded in the source.
A change of notation can be substantive when it invokes an identity or
condition. A long expression can remain one step when the registry treats
its simultaneous transformation as one operation.

\subsection{Extraction, review, and shared calculations}
The workflow proceeds from a source inventory to contextual states,
candidate dependencies, internal decomposition, independent review, and
an accepted measurement graph. An LLM can assist with the inventory,
local interpretation, and decomposition, but its output is a proposal
with explicit source evidence, not independent acceptance. The inventory of numbered items can be
checked against the manuscript. Its completeness does not establish the
completeness of unnumbered claims or implicit dependencies.
\psep
Evidence status and review status are separate. Evidence may record
explicit use, a locally justified interpretation, or an unresolved
candidate. Review status may be proposed, accepted, rejected, or unresolved.
The presence of a source location does not establish that the cited
passage supports the edge. Agreement among extraction systems likewise
does not replace a semantic assessment.
\psep
Sharing is allowed only when the context, objects, substitutions, inputs,
and calculation instance agree. Exact reprints may share a state while
remaining distinct document occurrences. Equivalent propositions proved
in different ways are not merged merely because they have the same truth
conditions. If a proposed identification creates self-dependence or a
cycle, the identification must be reconsidered.

\section{Depth, Bulk, and Dependency per Node}\label{sec:metrics}

\subsection{Ancestor subgraphs and basic quantities}
Let $G=(V,E)$ be the selected finite DAG, with edges directed from inputs
to outputs. Write $\Pred_G(v)$ for the direct predecessors of $v$, and let
$u\leadsto_G v$ denote a nonempty directed path. Define
\begin{align}
 A_G(v)&=\{v\}\cup\{u\in V:u\leadsto_G v\},\label{eq:ancestors}\\
 F_G(v)&=E\cap\bigl(A_G(v)\times A_G(v)\bigr).\label{eq:ancestor-edges}
\end{align}
Thus $F_G(v)$ is the edge set of the subgraph induced by all ancestors of
$v$, including $v$ itself.

\begin{definition}[Depth and bulk]\label{def:metrics}
The depth of $v$ is the length, in edges, of a longest directed path ending
at $v$. Its bulk is the number of distinct edges in its ancestor subgraph:
\begin{align}
 D_G(v)&=\begin{cases}
 0,&\Pred_G(v)=\varnothing,\\
 1+\max\limits_{u\in\Pred_G(v)}D_G(u),&\Pred_G(v)\ne\varnothing,
 \end{cases}\label{eq:depth}\\
 S_G(v)&=|F_G(v)|.\label{eq:bulk}
\end{align}
\end{definition}

When comparison conditions matter, these quantities can be written
$D_{M,B,P,W}(v)$ and $S_{M,B,P,W}(v)$. We suppress the contract only when
it is fixed. Depth describes sequential dependence; bulk describes the
amount of distinct recorded dependency structure supporting a target.
Bulk is neither the number of inference records nor the number of
arithmetic operations executed by a machine.
\psep
A four-edge chain, a branching argument with a shared input, and a
conclusion with four independent inputs all have bulk four but different
depths; see Figure~\ref{fig:examples}. These are abstract illustrations
of the definitions, not measurements extracted from particular papers.

\begin{figure}[htbp]
\centering
\begin{minipage}[t]{0.31\textwidth}\centering
\begin{tikzpicture}[x=1cm,y=1cm,>=Stealth,
 every node/.style={circle,draw,fill=white,inner sep=1.8pt,font=\small}]
\node (a) at (0,3) {$a$}; \node (b) at (0,2.25) {$b$};
\node (c) at (0,1.5) {$c$}; \node (d) at (0,.75) {$d$};
\node (t) at (0,0) {$t$};
\draw[->] (a)--(b); \draw[->] (b)--(c); \draw[->] (c)--(d); \draw[->] (d)--(t);
\end{tikzpicture}\\[4pt]
{\small (a) Chain\\$D(t)=4,\quad S(t)=4$}
\end{minipage}\hfill
\begin{minipage}[t]{0.31\textwidth}\centering
\begin{tikzpicture}[x=1cm,y=1cm,>=Stealth,
 every node/.style={circle,draw,fill=white,inner sep=1.8pt,font=\small}]
\node (a) at (.8,3) {$a$}; \node (b) at (0,1.5) {$b$};
\node (c) at (1.6,1.5) {$c$}; \node (t) at (.8,0) {$t$};
\draw[->] (a)--(b); \draw[->] (a)--(c); \draw[->] (b)--(t); \draw[->] (c)--(t);
\end{tikzpicture}\\[4pt]
{\small (b) Shared input\\$D(t)=2,\quad S(t)=4$}
\end{minipage}\hfill
\begin{minipage}[t]{0.31\textwidth}\centering
\begin{tikzpicture}[x=1cm,y=1cm,>=Stealth,
 every node/.style={circle,draw,fill=white,inner sep=1.8pt,font=\small}]
\node (a) at (0,3) {$a$}; \node (b) at (.65,3) {$b$};
\node (c) at (1.3,3) {$c$}; \node (d) at (1.95,3) {$d$};
\node (t) at (.975,0) {$t$};
\draw[->] (a)--(t); \draw[->] (b)--(t); \draw[->] (c)--(t); \draw[->] (d)--(t);
\end{tikzpicture}\\[4pt]
{\small (c) Independent inputs\\$D(t)=1,\quad S(t)=4$}
\end{minipage}
\caption{Equal bulk does not determine depth. The arrows record selected
input use, and the displayed measurements concern the final state $t$.}
\label{fig:examples}
\end{figure}

\subsection{Union calculation and shared structure}
The ancestor edge sets satisfy
\begin{equation}\label{eq:union}
 F_G(v)=\bigcup_{u\in\Pred_G(v)}\bigl(F_G(u)\cup\{(u,v)\}\bigr).
\end{equation}
An empty union is empty. Computing unions, rather than adding predecessor
bulks, prevents a shared subargument from being counted repeatedly.

\begin{proposition}[Correctness of the union recurrence]\label{prop:union}
Processing a finite DAG in topological order using
\eqref{eq:depth} and \eqref{eq:union} computes the longest incoming path
length and the distinct ancestor edge set of every vertex.
\end{proposition}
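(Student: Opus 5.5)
The plan is strong induction along a fixed topological order $v_1,\dots,v_n$ of $G$. I will use two facts about such an order: every predecessor of $v_k$ appears before $v_k$, and every ancestor of $v_k$ also appears before it. The first fact makes both recurrences well defined at each step, because the right-hand sides of \eqref{eq:depth} and \eqref{eq:union} only refer to values already computed. The second fact lets the inductive hypothesis apply to every predecessor. Finiteness and acyclicity guarantee that such an order exists and that every directed path is finite, so a longest path ending at a given vertex exists.

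For depth, I will show that the value computed by \eqref{eq:depth} equals the length $L(v)$ of a longest directed path ending at $v$.
\begin{itemize}
\item If $\Pred_G(v)=\varnothing$, the only path ending at $v$ is the trivial one, so $L(v)=0$.
\item Otherwise, every nonempty path ending at $v$ has a last edge $(u,v)$ with $u\in\Pred_G(v)$, and its prefix is a path ending at $u$. Hence $L(v)\le 1+\max_u L(u)$.
\item Conversely, appending the edge $(u,v)$ to a longest path ending at $u$ gives a path ending at $v$. It is a valid path because $G$ is acyclic, so $v$ cannot already lie on the prefix. This gives the reverse inequality.
\end{itemize}
By the inductive hypothesis $L(u)=D_G(u)$ for each predecessor, so the two sides agree.

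For bulk, the key lemma is the set identity \eqref{eq:union}. I will first establish
\[
A_G(v)=\{v\}\cup\bigcup_{u\in\Pred_G(v)}A_G(u).
\]
This holds because a nonempty path into $v$ passes through its last predecessor. Then I will prove \eqref{eq:union} by double inclusion.
\begin{itemize}
\item For $\supseteq$: each edge $(u,v)$ has both endpoints in $A_G(v)$. Any edge of $F_G(u)$ has both endpoints in $A_G(u)\subseteq A_G(v)$.
\item For $\subseteq$: take an edge $(x,y)\in E$ with $x,y\in A_G(v)$. If $y=v$, then $x\in\Pred_G(v)$ and the edge is $(x,v)$.
\item If $y\ne v$, then $y$ reaches $v$ through some predecessor $u$, so $y\in A_G(u)$. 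Since $x\to y$, we also have $x\in A_G(u)$, so $(x,y)\in F_G(u)$.
\end{itemize}
Combined with the inductive hypothesis that the algorithm stored exactly $F_G(u)$ for each predecessor $u$, this shows the computed set equals $F_G(v)$. Its cardinality is then $S_G(v)$. The empty-predecessor case gives $F_G(v)=\varnothing$, since $A_G(v)=\{v\}$ and there are no self-loops.

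The main obstacle is the $\subseteq$ direction of \eqref{eq:union}. One has to check that an edge both of whose endpoints are ancestors of $v$ is captured by a single predecessor's ancestor subgraph, rather than straddling two branches. The observation that $x\to y$ places $x$ in $A_G(u)$ whenever $y$ lies there is exactly what resolves this, so I will state it explicitly. Everything else is routine bookkeeping.
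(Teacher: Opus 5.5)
Your proof is correct and follows the paper's route: induction along a topological order, the last-edge decomposition for the depth recurrence, and a double-inclusion argument for \eqref{eq:union} in which an ancestor edge either ends at $v$ or lies in the ancestor subgraph of one direct predecessor. You make explicit the steps the paper states only briefly, namely the reverse inequality for depth (appending $(u,v)$ to a longest path into $u$) and the observation that $x\to y$ with $y\in A_G(u)$ forces $x\in A_G(u)$.
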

\begin{proof}
At a source, there are no incoming edges and both statements hold. Every
nonempty path ending at $v$ has a final edge $(u,v)$ with
$u\in\Pred_G(v)$, giving the recurrence for depth. An edge in the ancestor
subgraph of $v$ either ends at $v$ or lies on a path to one of its direct
predecessors, so it belongs to the right-hand side of \eqref{eq:union}.
Conversely, every edge on that right-hand side lies in the ancestor
subgraph of $v$. Induction in topological order proves the claims.
Set union retains each shared edge only once.
\end{proof}

For several targets $T\subseteq V$, the distinct supporting edges are
$F_G(T)=\bigcup_{t\in T}F_G(t)$. In general,
$|F_G(T)|\ne\sum_{t\in T}S_G(t)$. The difference between the latter sum
and the union size counts repeated edge incidences across target ancestor
subgraphs. It is a description of overlap, not a new inference count.

\subsection{Reachability and changes of granularity}
\begin{proposition}[Depth is determined by fixed-vertex reachability]\label{prop:depth}
Two finite DAGs on the same vertex set with the same reachability relation
have the same depth at every vertex.
\end{proposition}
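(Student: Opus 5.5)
The plan is to show that the depth $D_G(v)$ equals a quantity defined purely from the reachability relation, namely the length of a longest chain in the strict partial order $u<v \iff u\leadsto_G v$. Because $G$ is a finite DAG, this relation is irreflexive and transitive, so it is a strict partial order on $V$. Concretely, I will prove that for every vertex $v$,
\begin{equation*}
 D_G(v)=\max\{k:\ \text{there exist } u_0<u_1<\dots<u_k=v\},
\end{equation*}
where $<$ is reachability in $G$. The right-hand side depends only on the vertex set and the reachability relation. Two DAGs $G,G'$ with the same $V$ and the same relation therefore have $D_G(v)=D_{G'}(v)$ for all $v$.

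The inequality $\le$ is immediate. By Proposition~\ref{prop:union}, the recurrence \eqref{eq:depth} computes the length of a longest directed path ending at $v$. A longest path $u_0\to u_1\to\dots\to u_k=v$ has each consecutive pair joined by an edge, which is a nonempty path. So $u_0<u_1<\dots<u_k$ is a chain of length $k$ ending at $v$.

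For $\ge$, I take a chain $u_0<\dots<u_k=v$. Each step $u_{i-1}\leadsto_G u_i$ is a nonempty directed path, so it contributes at least one edge. Concatenating these paths gives a directed walk ending at $v$ with at least $k$ edges. Since $G$ is acyclic, the walk repeats no vertex: a repeat would produce a directed cycle. The walk is therefore a path, and its length is at most $D_G(v)$.

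The only point needing care is this acyclicity step, which turns the concatenated walk into a genuine path so that its length is bounded by the longest-path depth. Finiteness guarantees that the maximum exists. I would close by noting that the same argument shows the sources, the vertices of depth zero, are exactly the minimal elements of the order, so the base case also depends on reachability alone.
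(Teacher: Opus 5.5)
Your proof is correct. Like the paper, you identify $D_G(v)$ with the length of a longest chain ending at $v$ in the reachability order, and your path-to-chain direction is the same as the paper's. The two proofs differ in the converse.

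The paper takes a \emph{longest} chain and argues that consecutive elements must form cover pairs, since otherwise an intermediate element would lengthen the chain. It then notes that a cover pair must be a direct edge in any DAG with that reachability relation, because a longer realizing path would supply an intermediate element. So the longest chain is itself a path in both graphs at once.

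You instead take an \emph{arbitrary} chain, concatenate the realizing paths, and use acyclicity to show the resulting walk is a path with at least $k$ edges. Your route is more elementary: it needs no maximality argument and no notion of covers, and it bounds every chain directly.

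The paper's route gives more structure. Cover pairs are edges of every DAG with the given reachability. This is the same fact that makes the transitive reduction the cover graph, used in Proposition~\ref{prop:macro}, and it directly explains why transitive reduction preserves depth.
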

\begin{proof}
Reachability defines a strict partial order. Every directed path ending
at $v$ is a chain in that order. Conversely, take a longest chain ending
at $v$. Consecutive elements of the chain are related by a cover relation;
otherwise an intermediate element could extend the chain. A cover
relation must be represented by a direct edge in either DAG, since a
longer realizing path would supply an intermediate element. Thus a
longest chain is a path in each graph, and its length is determined by
reachability alone.
\end{proof}

Transitive reduction therefore preserves depth on a fixed vertex set,
but it need not preserve bulk. Adding $a\to c$ to $a\to b\to c$ leaves
$D(c)=2$ and changes $S(c)$ from two to three. If $a$ is actually used
directly in the selected justification of $c$, the additional edge belongs
in the basic measurement graph even if a simplified visualization omits it.

\begin{proposition}[Failure of invariance under arbitrary refinement]\label{prop:refinement}
Depth and bulk are not invariant under arbitrary subdivision of internal
argument steps, even when reachability between the original boundary
vertices is unchanged.
\end{proposition}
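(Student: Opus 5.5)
This is a non-invariance claim, so I will prove it by exhibiting an explicit counterexample rather than by a general argument. The plan is to take the smallest possible original graph, subdivide one internal step, and compute $D$ and $S$ before and after directly from Definition~\ref{def:metrics}.

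\emph{The graphs.} Let $G$ be the two-edge chain $a\to b\to c$. The boundary input is $a$, the target is $c$, and $b$ is an internal state. In $G$ we have $D_G(c)=2$ and $S_G(c)=2$. Let $G'$ refine the step $b\to c$ by inserting a new intermediate state $m$, so that $b\to m\to c$ replaces $b\to c$. This models a calculation that a finer granularity policy $P$ splits into two registered operations. Then $G'$ is the chain $a\to b\to m\to c$, with $D_{G'}(c)=3$ and $S_{G'}(c)=3$, computed directly or via Proposition~\ref{prop:union}.

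\emph{Reachability is unchanged.} For the original vertices $\{a,b,c\}$, the relation $x\leadsto y$ holds in $G'$ exactly when it holds in $G$. The pairs are $a\leadsto b$, $a\leadsto c$ and $b\leadsto c$ in both graphs. Restricted to the original vertices, reachability is identical, yet both $D(c)$ and $S(c)$ change. This shows the statement for the target vertex.

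\emph{Why Proposition~\ref{prop:depth} does not apply, and the two quantities can move separately.} Proposition~\ref{prop:depth} requires the same vertex set. Subdivision adds $m$, which lengthens the longest chain ending at $c$, so there is no contradiction. It is also worth noting that the two quantities need not move together. Subdividing one of two parallel branches changes bulk. For example, take $a\to t$ and $b\to t$ and subdivide the first edge as $a\to m\to t$. Depth changes from $1$ to $2$, while bulk changes from $2$ to $3$. In a graph with a longer parallel branch, the same subdivision leaves depth fixed while bulk still grows. Including one such example shows that neither quantity is protected by the other.

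I expect no real obstacle. The only care needed is to state precisely what ``reachability between the original boundary vertices is unchanged'' means, namely the restriction of $\leadsto$ to the original vertex set. I would also make sure the added state $m$ is legitimate under the protocol, as an output consumed by another inference, so the example is not dismissed as an artificial subdivision.
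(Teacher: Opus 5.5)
Your counterexample is correct and is essentially the paper's argument. The paper replaces a single edge $a\to b$ by $a\to x_1\to\cdots\to x_k\to b$, so that $D(b)$ and $S(b)$ go from $1$ to $k+1$ while $a$ still reaches $b$, and it notes, as you do, that Proposition~\ref{prop:depth} does not apply because the vertex set changes. One small caution concerns your aside: subdividing an edge of a target's ancestor subgraph raises its bulk by exactly one per inserted vertex, so subdivision can change bulk with depth fixed but never depth with bulk fixed, and your first two-branch example in fact moves both quantities, so ``neither quantity is protected by the other'' overstates what subdivision shows.
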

\begin{proof}
Replace the single edge $a\to b$ by
$a\to x_1\to\cdots\to x_k\to b$. Both the depth and bulk of $b$ change
from one to $k+1$, while $a$ still reaches $b$. The vertex set has changed,
so Proposition~\ref{prop:depth} does not apply.
\end{proof}

This elementary example explains why a granularity policy is necessary.
It does not say that every refinement changes a graph's maximum depth:
subdividing a noncritical branch can leave the longest path unchanged.
An increase in recorded steps and an increase in depth are distinct events.

\subsection{Conditional bounds for unresolved dependencies}
\begin{proposition}[Monotonicity on a fixed vertex set]\label{prop:bounds}
Suppose $G^-=(V,E^-)$, $G=(V,E)$, and $G^+=(V,E^+)$ satisfy
$E^-\subseteq E\subseteq E^+$, and $G^+$ is a DAG. Then, for every $v$,
\begin{equation}\label{eq:bounds}
\begin{split}
 D_{G^-}(v)&\le D_G(v)\le D_{G^+}(v),\\
 S_{G^-}(v)&\le S_G(v)\le S_{G^+}(v).
\end{split}
\end{equation}
\end{proposition}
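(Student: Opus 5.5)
It suffices to prove a single monotonicity statement: if $H_1=(V,E_1)$ and $H_2=(V,E_2)$ satisfy $E_1\subseteq E_2$ and $H_2$ is a DAG, then $D_{H_1}(v)\le D_{H_2}(v)$ and $S_{H_1}(v)\le S_{H_2}(v)$ for every $v\in V$. Applying this to the pairs $(G^-,G)$ and $(G,G^+)$ gives \eqref{eq:bounds}. Both $G^-$ and $G$ are DAGs automatically, since a directed cycle in a subgraph of $G^+$ would be a cycle in $G^+$. Hence all three depths are well defined by Definition~\ref{def:metrics}, and only the acyclicity of $G^+$ needs to be assumed.

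For depth, I would use the characterization in Definition~\ref{def:metrics} (equivalently Proposition~\ref{prop:union}) that $D_H(v)$ is the length of a longest directed path ending at $v$. Fix a longest path in $H_1$ ending at $v$. Every edge of this path lies in $E_1\subseteq E_2$, so it is also a directed path in $H_2$ ending at $v$. Its length is therefore at most $D_{H_2}(v)$. Finiteness and acyclicity make the maximum in $H_2$ exist. An alternative route is induction in a topological order of $H_2$, which is also a topological order of $H_1$, using the recurrence \eqref{eq:depth} together with $\Pred_{H_1}(v)\subseteq\Pred_{H_2}(v)$. The path argument is shorter.

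For bulk, I would show the set inclusion $F_{H_1}(v)\subseteq F_{H_2}(v)$ and then take cardinalities. First, $A_{H_1}(v)\subseteq A_{H_2}(v)$, because any nonempty path $u\leadsto_{H_1}v$ is also a path in $H_2$. Then
\[
F_{H_1}(v)=E_1\cap\bigl(A_{H_1}(v)\times A_{H_1}(v)\bigr)\subseteq E_2\cap\bigl(A_{H_2}(v)\times A_{H_2}(v)\bigr)=F_{H_2}(v),
\]
using \eqref{eq:ancestors} and \eqref{eq:ancestor-edges}. The bound $S_{H_1}(v)\le S_{H_2}(v)$ follows.

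The argument has essentially no hard step. The point that needs care is the role of the fixed vertex set $V$. If vertices could be added, as in Proposition~\ref{prop:refinement}, the comparison would still hold for inclusions of graphs but would no longer describe a bracketing of the same argument, so it is worth noting why the hypothesis is stated this way. I would also remark that the bounds are generally strict. The example after Proposition~\ref{prop:depth} shows that adding $a\to c$ increases bulk while leaving depth unchanged, so the two inequalities are independent.
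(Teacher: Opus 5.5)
Your proof is correct and follows the paper's own one-line argument: adding edges destroys no paths, so it removes no ancestors and no ancestor-subgraph edges. You simply write this out through the reduction to a single pair and the inclusions $A_{H_1}(v)\subseteq A_{H_2}(v)$ and $F_{H_1}(v)\subseteq F_{H_2}(v)$. One small correction to your closing remark: the two inequalities are not fully independent, because $D_G(v)<D_{G^+}(v)$ forces a longest path in $G^+$ ending at $v$ to use some edge of $E^+\setminus E$, that edge lies in $F_{G^+}(v)\setminus F_G(v)$, and so a strict depth increase always gives a strict bulk increase.
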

\begin{proof}
Adding edges removes neither existing paths nor existing ancestors and
edges in an ancestor subgraph. Hence neither quantity decreases.
\end{proof}

The proposition provides bounds only if the inclusion assumptions hold
for the intended graph. It does not turn the minimum and maximum values
returned by several extraction systems into a confidence interval. An
upper graph containing mutually exclusive alternatives may give a
numerical upper bound, if acyclic, without representing any realizable
proof. Changes in state correspondence or granularity require a separate
analysis and are not covered by \eqref{eq:bounds}.

\subsection{Dependency per node and structural interconnection}
For a nonempty graph, define
\begin{equation}\label{eq:q}
 q(G)=\frac{|E|}{|V|}.
\end{equation}
This is both the average indegree and the average outdegree. It is a
natural operational interpretation of ``dependency per node'' and a
first summary of how many direct relations the representation records
per state. We call it the \emph{dependency-per-node ratio}. If the informal
phrase ``connectivity complexity'' is used for this quantity, it should
refer only to this specified ratio, not to a general measure of
mathematical complexity or connectedness.
\psep
For example, a chain and a star on the same number $n$ of vertices both
have $q=(n-1)/n$. The ratio does not distinguish their sequential structure.
It is also different from density normalized by the maximum number of
edges in a simple DAG, $2|E|/(|V|(|V|-1))$, when $|V|>1$. Both quantities depend on which
vertices and relation types the measurement includes.
\psep
For a target, put $N(v)=|A_G(v)|$. When $N(v)>1$, two complementary
descriptors are
\begin{equation}\label{eq:rho-mu}
 \rho(v)=\frac{S(v)}{N(v)-1},\qquad
 \mu(v)=S(v)-N(v)+1.
\end{equation}
The underlying undirected ancestor graph is connected, because every
ancestor reaches $v$. Hence $S(v)\ge N(v)-1$, and $\mu(v)$ is its undirected
cycle rank. Branches may diverge and reconverge in a DAG, producing an
undirected cycle without a directed logical cycle. For an isolated input,
$\mu=0$ and $\rho$ is left undefined. These descriptors complement depth
and bulk; none determines the entire shape of an argument.

\section{Macro Skeletons and Argument Shape}\label{sec:macro}

\subsection{Projection onto fixed anchors}
Fine-grained measurements depend on decomposition. For a complementary
view of a paper's organization, fix an anchor set $H\subseteq V$ before
comparison. Anchors may include principal results, named constructions,
definitions, and selected external inputs. Exact reprints pointing to one
state have a single representative in $H$. The anchor policy is part of
the measurement contract, rather than a choice made after seeing which
graph looks preferable.

\begin{definition}[Macro projection and skeleton]\label{def:macro}
The macro projection $M_H(G)$ has vertex set $H$. Distinct anchors $a,b$
are joined by $a\to b$ if $G$ has a directed path from $a$ to $b$ whose
internal vertices lie outside $H$. The macro skeleton is
\begin{equation}\label{eq:macro}
 K_H(G)=\TR\bigl(M_H(G)\bigr).
\end{equation}
\end{definition}

\begin{proposition}[Preservation of anchor reachability]\label{prop:macro}
The projection $M_H(G)$ preserves exactly the reachability relation of
$G$ restricted to $H$. Consequently, two finite DAGs with identified
anchor sets and the same reachability relation between those anchors
have the same macro skeleton under that identification.
\end{proposition}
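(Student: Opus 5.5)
The plan is to prove the first claim as two inclusions between reachability relations on $H$, and then to obtain the skeleton statement from the uniqueness of the transitive reduction of a finite DAG \cite{Aho}.

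\emph{Projection edges come from paths in $G$.} Every macro edge $a\to b$ in $M_H(G)$ is witnessed by a directed path in $G$. Hence, if $a$ reaches $b$ in $M_H(G)$, concatenating the witnessing paths gives a walk from $a$ to $b$ in $G$. Since $G$ is a DAG, this walk is a nonempty directed path, so $a\leadsto_G b$.

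\emph{Paths in $G$ factor through anchors.} Conversely, suppose $a,b\in H$ are distinct and $a\leadsto_G b$. I would list the anchors occurring on one such path in order, $a=h_0,h_1,\dots,h_m=b$. Any two consecutive anchors $h_{i-1},h_i$ are joined by a segment of the path whose internal vertices lie outside $H$. Because a directed path repeats no vertex, consecutive anchors are distinct, and each segment is a macro edge $h_{i-1}\to h_i$. Their composition is a path from $a$ to $b$ in $M_H(G)$. The two inclusions together show that reachability in $M_H(G)$ equals the restriction of $\leadsto_G$ to $H$.

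\emph{Acyclicity and the skeleton.} As a byproduct, $M_H(G)$ is acyclic: a directed cycle in it would give $a\leadsto_G a$, which is impossible in a DAG. This is needed for $\TR$ to be well defined and unique. For a finite DAG, the transitive reduction is exactly the set of cover pairs of its reachability order, the same fact used in the proof of Proposition~\ref{prop:depth}. So $K_H(G)$ depends only on reachability among anchors. If two DAGs $G,G'$ have identified anchor sets with the same anchor reachability, then $M_H(G)$ and $M_H(G')$ have the same reachability relation by the first part, and hence $K_H(G)=K_H(G')$.

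The main obstacle is bookkeeping rather than depth. The point needing care is that consecutive anchors along a path are distinct, so that no macro self-loop is required. This holds because directed paths in a DAG repeat no vertex. The identification of anchor sets should be stated explicitly as a bijection $H\to H'$ that transports the relation; the argument is then applied under that bijection.
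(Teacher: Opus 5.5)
Your proof is correct and follows essentially the same route as the paper. You split a $G$-path between anchors at its intermediate anchors to obtain a projection path, replace projection edges by supporting paths to go back, observe that $M_H(G)$ is acyclic, and conclude from the uniqueness of the transitive reduction of a finite DAG as the cover graph of its reachability order, while also making explicit two points the paper leaves implicit (consecutive anchors are distinct, and why the projection is acyclic).
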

\begin{proof}
A path between anchors in $G$ can be split at every intermediate anchor.
Each resulting segment gives an edge of $M_H(G)$, hence a path in the
projection. Conversely, replacing every projection edge by one of its
supporting paths gives reachability in $G$. The projection is acyclic.
A finite DAG has a unique transitive reduction, namely the cover graph
of the partial order defined by reachability \cite[Theorem 1]{Aho}. The conclusion
follows.
\end{proof}

The result gives a precise condition under which different internal
decompositions yield the same macro skeleton. It does not guarantee that
independent annotators recover the same anchor relations. Branches that
do not contribute to a connection between anchors may disappear from
the projection, and internal differences can be hidden. The full graph
and the supporting paths must therefore remain available.

\subsection{Structural signatures and visual layout}
The shape of an argument is defined by its graph structure, not by screen
coordinates. A structural signature may include height, the distribution
of vertices across depth levels, maximum antichain size, weak component
count, the number of terminal results, branching and merging frequencies,
and overlap between target ancestor subgraphs. Normalized profiles should
be reported together with their original vertex and edge counts.
\psep
Vertices at the same depth form an antichain, but the largest depth level
need not be a maximum antichain. Nor does agreement on a finite collection
of summary statistics imply graph isomorphism. Signatures serve as
descriptors for exploration and statistical comparison; important
differences should be checked against anchor reachability and the source.
\psep
A reading interface may use a compact node-and-edge view with details
revealed on selection. Selecting a target should highlight its ancestors
and the induced dependency edges, while preserving access to source
locations and evidence. Layout layers are presentation choices, not
necessarily the numerical depth. Reproducing a visual arrangement also
requires fixing the layout algorithm and its tie-breaking rules.

\section{From Paper Graphs to a Math Graph}\label{sec:mathgraph}

\subsection{Connecting theorem applications across papers}
A Math Graph connects identified states and inference instances across
manuscripts. A paper-level citation is a starting point for locating a
result, but an accepted application additionally specifies the theorem's
version and statement, its hypotheses, the substitution used, and the
relevant conventions. A theorem with the same name may have a corrected
statement, stronger hypotheses, or a different specialization in another
version.
\psep
Global identifiers should distinguish the manuscript version, the
statement or state, and the proof selection. An importing paper records
an application instance and the evidence that its hypotheses are met.
Two applications may share a source theorem without being the same
computational step. When moving between mathematical settings requires a
translation, that translation and its justification are recorded as part
of the connection. Similarity links can help locate candidates but remain
distinct from accepted uses.
\psep
Circularity in the document citation network does not imply circularity
in a selected theorem proof. Mutual references, revised versions, and
alternative arguments must be separated before combining measurement
graphs. Acyclicity is checked again after integration. An external
dependency that has not been resolved remains at the declared boundary.

\subsection{Axiom-relative and boundary-relative depth}
Expanding external results can, in principle, connect selected arguments
to specified axioms. Axiom-relative depth nevertheless requires a proof
system, inference rules, definition-expansion conventions, library
versions, and proof choices. The name of an axiom system alone does not
determine a unique graph or numerical depth.
\psep
We reserve the description \emph{axiom-relative} for an expansion in which
every source is accounted for by specified axioms, primitive definitions,
or explicit hypotheses, and no external theorem remains an unexpanded
input. Otherwise, the quantity is reported as boundary-relative depth.
For an informal manuscript, correspondence with a formal foundational
representation is itself an additional claim requiring evidence.
A human-oriented substantive step and a kernel-level inference
are different units. Their measurements cannot be compared without an
explicit translation and normalization policy.
\psep
If several proofs of one statement are available, their measurements can
be reported separately. A minimum over this collection is only a minimum
over the available proofs, not over every possible proof of the statement.
Moreover, minimizing the bulk of each target separately need not minimize
the bulk of their union: different proof choices can create or eliminate
shared subarguments.

\section{Applications and Structural Research Questions}\label{sec:applications}

\subsection{Reading support and prerequisite navigation}
The immediate application is a source-linked reading map. A reader
selects a result, inspects its prerequisite subgraph, and marks inputs
already understood. The remaining graph can expose missing definitions,
shared lemmas, and constructions reused by several results. It can also
support alternative reading paths with different intermediate goals.
\psep
This use does not assume that logical order and pedagogical order
coincide. Examples, motivation, and exploratory calculations may be
educationally valuable even when they are not inputs to the final proof.
They should remain accessible as document material with their own
relation types. Learning outcomes must be evaluated through understanding,
retention, and transfer, rather than inferred from the appearance of a
well-organized graph.

\subsection{Single-author and collaborative argument shapes}
An author-associated shape is a hypothesis about recurring organization
in written arguments. It can be investigated by asking whether structural
signatures recur across an author's single-author papers after accounting
for field, problem type, length, period, and the measurement policy.
The object of study is expository and argumentative organization, not
personality or ability.
\psep
Collaborative papers introduce further questions. Their shape may reflect
interaction between different approaches, a division of mathematical
tasks, or a principal organizer of the exposition. It need not be an
average of the collaborators' single-author signatures. Where available
with appropriate consent, documented contribution and writing roles can
be compared with the graph. Author order should not be used as a proxy
for those roles.
\psep
Problem structure is a competing explanation. Different authors working
on closely related questions, different proofs of the same result, and
successive versions of one manuscript provide complementary comparisons.
If authors and topics almost completely coincide in a dataset, their
effects cannot be reliably separated. Repeated collaborative combinations
and samples crossing authors with topics are needed before discussing
stable collaboration patterns. Observational associations alone do not
establish a causal effect of collaboration.

\subsection{Co-use, rare combinations, and cross-field connections}
Co-citation and actual co-use should be recorded separately. Co-citation
in Small's sense concerns documents cited together \cite{Small}. For
our purpose, let $\mathcal U$ be a fixed collection of proof units and
let $\mathcal T$ be a set of identified theorem statements. Define
\begin{equation}\label{eq:co-use}
 X_{ut}=\ind\{t\text{ is used in proof unit }u\},\qquad
 C(t,t')=\sum_{u\in\mathcal U}X_{ut}X_{ut'}\quad(t\ne t').
\end{equation}
The policy must specify whether ``used'' means a direct input, an input
to the selected proof as a whole, or membership in its full ancestor
set. These are different analyses. Theorem identity is retained at the
source level while distinct application instances remain available in
the underlying data.
\psep
The matrix $C$ can be used to investigate groups of theorems repeatedly
used together. Raw counts are strongly affected by theorem popularity
and proof size. A suitable null model should preserve, or otherwise
account for, theorem-use frequencies and the numbers of inputs per
proof unit. Field, period, and available opportunities for joint use are
additional controls. Statistical enrichment is a prompt for mathematical
interpretation, not an explanation of why the theorems belong together.
\psep
Pairs that are rarely used together can also be investigated, but low
co-use does not imply logical incompatibility. A rare theorem, limited
corpus coverage, or a lack of relevant problems may explain the pattern.
Searching many pairs additionally requires attention to multiplicity
and sparse observations.
\psep
To study cross-field connections, first fix an external subject
classification or a reproducible clustering procedure. They are not the
same notion of a field. For a chosen single cluster label $c(v)$, a simple
descriptive quantity for a node with positive outdegree is
\begin{equation}\label{eq:cross-field}
 b_{\mathrm{out}}(v)=
 \frac{|\{w:(v,w)\in E,\ c(w)\ne c(v)\}|}
 {|\{w:(v,w)\in E\}|}.
\end{equation}
It records the fraction of direct uses going outside the node's cluster.
The absolute number and distribution of those uses must also be reported:
one external use and a hundred external uses can have the same fraction.
Multiple subject labels require a separate weighting rule. Paths through
the graph provide additional information about bridging roles, but
centrality can reflect common infrastructure, as emphasized in the
Mathlib analysis \cite[Appendix C.2.5]{Mathlib}. Corpus coverage is an
additional control in comparisons across papers. Centrality is not a
direct measure of mathematical importance.

\subsection{Probabilistic premise suggestions and proof review}
The main corpus-level application proposed here is to use recorded
dependency patterns to guide mathematical scrutiny. Learned premise
selection and proof re-verification already provide precedents
\cite{AlamaPremise}; our question is how reviewed, source-grounded
argument records can support this task for informal manuscripts as well.
The prediction target must be specified before assigning probabilities.
For example, let $J$ denote the set of theorem inputs recorded in a proof
unit sampled under a declared corpus and proof-selection policy, let
$\varphi$ be its target, let $\Gamma$ be its context, and let $K\subseteq J$
be the theorem inputs already observed and accepted as recorded uses.
For a nonempty finite candidate set $T\subseteq\mathcal T\setminus K$,
a future model could estimate
\begin{equation}\label{eq:premise-probability}
 \widehat p_{\mathcal D}(T\mid\varphi,\Gamma,K)
 =\widehat{\mathbb P}_{\mathcal D}
   (T\subseteq J\mid\varphi,\Gamma,K).
\end{equation}
Here $\mathcal D$ denotes the reviewed training corpus together with its
identity, availability, and use-label policies. This expression defines
a possible prediction task, not an estimator implemented or evaluated
in this paper. It predicts recorded joint use under that policy, not
logical necessity, applicability, or the truth of the target statement.
Different valid proof alternatives can have different input sets.
The mechanism by which $K$ is observed or masked is part of the prediction
task and must be fixed when calibration is compared. Merely proposed or
unchecked premises are not assumed to belong to $J$.
\psep
Co-use groups can suggest premise bundles that repeatedly complement
known inputs. Bridge nodes can extend retrieval across subject
communities, while rare configurations can identify cases for closer
inspection. Pairwise counts alone do not determine a joint probability
for a larger bundle, and missing annotations must not automatically be
treated as negative use labels. Low frequency can reflect a legitimate
unusual argument or limited corpus coverage. It is not itself a defect.
\psep
For both human-written and AI-generated proofs, a review interface could
associate each suggested check with supporting proof units and their
source passages. A reviewer then examines the proposed theorem's exact
statement and version, the substitution, all hypotheses, and the local
transition it is meant to justify. This can organize checks for omitted
premises, unproved intermediate claims, and invalid applications. A
candidate repair absent from the original source remains separate from
the reconstructed source graph; it enters only an explicitly identified
revised proof with new evidence. Consequently, suggestions do not alter
the original argument's depth or bulk merely by being recommended.
\psep
The intended benefit is more effective proof review, to be tested by
defect-detection and repair outcomes rather than by familiar-looking
graphs. Corpus agreement, LLM agreement, and DAG validation do not
establish mathematical correctness. Formal checking can supply an
additional validation route when available; correspondence between the
formalized statement and the source argument must still be assessed.
This application extends the research program beyond the discovery,
recomputation, and linking modes specified by PGPP~0.1.

\section{Algorithms and Reproducible Execution}\label{sec:algorithms}

\subsection{Input checks and deterministic calculation}
Before measurement, check unique state identifiers, edge endpoints,
relation types, the proof selection, and the declared boundary. Normalize
only accepted instances of the same calculation. Several evidence records
supporting the same ordered pair produce one measurement edge. A
self-loop after normalization is an error to investigate. The actual
sources of the graph must agree with its declared accepted inputs and
primitives.
\psep
The following procedure operates on an accepted graph. Source
interpretation belongs to the process that constructs its input.

\medskip
\textbf{Algorithm 1. Contract-relative graph measurement.}
\begin{enumerate}
\item Check versions of the manuscript, contract, annotations, and code.
\item Normalize accepted shared calculations and construct the selected
set of dependency edges.
\item Compute a topological order with fixed identifier-based tie-breaking.
Report an unresolved cycle rather than returning DAG measurements.
\item In that order, compute $D(v)$ and $F(v)$ from
\eqref{eq:depth} and \eqref{eq:union}; store $S(v)=|F(v)|$.
\item Propagate declared unresolved-state flags to dependent states.
\item Construct $M_H(G)$ and $K_H(G)$ and check preservation of anchor
reachability.
\item Map measurements back to source occurrences and store input hashes,
output hashes, and the calculation checks.
\end{enumerate}

Unresolved status can occur at internal states, not only at boundary
inputs. If $U$ is the set of states carrying such a flag, define
\begin{equation}\label{eq:incomplete}
 I(v)=\ind\{A_G(v)\cap U\ne\varnothing\}.
\end{equation}
This propagates the effect of declared unresolved material. The value
$I(v)=0$ does not establish that the source is complete or correct; it
only records the absence of currently declared unresolved states in
the ancestor subgraph. Opaque decomposition and an intentionally unexpanded
external boundary should also have separate labels, so that missing
knowledge and the chosen analytical scope are not conflated.

\subsection{Complexity and independent recalculation}
Let $n=|V|$ and $m=|E|$. Given a topological order, depth can be calculated
in $O(n+m)$ time. If hash-set insertion and lookup have expected constant
time, the explicit ancestor-edge unions require expected time
\begin{equation}\label{eq:runtime}
 O\!\left(n+m+\sum_{(u,v)\in E}|F_G(u)|\right).
\end{equation}
This is at most $O(n+m+m^2)$, while storing all such sets requires
$O(n+m+\sum_{v\in V}|F_G(v)|)$ space. Identifier sorting and deterministic
tie-breaking have their own implementation-dependent overhead. Using a
topological order does not make the entire bulk calculation linear.
Bitsets, incremental updates, or persistent shared set representations
are possible alternatives for larger graphs.
\psep
If there are $h$ anchors, searching from each anchor until another anchor
is encountered gives a simple macro projection in worst-case
$O(h(n+m))$ time. Reachability and transitive reduction are then computed
on the projection. Practical reports should specify the algorithms used
and measure memory as well as elapsed computation time. These bounds
concern the graph calculation, not the prior interpretation of a manuscript.
\psep
Independent checks should follow the definitions through a different
calculation route where feasible. Bulk can be checked by first finding
the ancestor vertices and then directly counting induced edges. A
macro skeleton can be checked for reachability preservation and for the
necessity of each retained edge. Such tests establish properties of the
recorded graph; they do not establish that its mathematical interpretation
is faithful to the source.

\subsection{Three levels of reproducibility}
\emph{Computational reproducibility} means obtaining the same measurements
from the same accepted graph. \emph{Annotation reproducibility} concerns
agreement on actual uses when the inventory and decomposition are fixed.
\emph{Discovery reproducibility} concerns agreement when annotators start
from a new manuscript and identify both items and internal steps.
Deterministic calculation addresses the first level. The latter two
require independently annotated reference material and repeated studies.
\psep
Execution records should include the actual tool identifiers, settings,
provided material, source scope, review decisions, and change history.
Unavailable model versions or random seeds must not be invented.
Re-extraction from a manuscript is distinguished from recalculation on
accepted data. A newly discovered dependency creates an evidenced revision,
not an undocumented change to an earlier result.

\section{A Standardized Prompt Protocol}\label{sec:prompt}

\subsection{Purpose and normative status}
To make the framework reusable, we specify the \emph{Paper Graph Prompt
Protocol}, PGPP~0.1. The complete reference prompt appears in
Appendix~\ref{app:prompt}; it can be copied with a source bundle and a run
configuration. It is an operational specification of the paper's method,
not merely a request to draw a dependency graph. Its requirements cover
source identification, decomposition, evidence, review provenance, graph
selection, numerical calculation, and the structure of the returned data.
\psep
PGPP~0.1 is a proposed research protocol, not an externally adopted
standard. Within the protocol, \texttt{MUST} identifies a required condition,
\texttt{SHOULD} allows a documented deviation, and \texttt{MAY} denotes an
optional feature. Conformance concerns adherence to these declared
requirements. It does not establish the truth of an extracted mathematical
claim or guarantee agreement among independent annotators.

\subsection{Three operating modes}
The protocol separates three tasks that should not be conflated in a
reproducibility claim. In \texttt{DISCOVER}, a system constructs proposed
annotations from supplied manuscripts. It resolves its provisional
contract explicitly, and may calculate provisional measurements when
the structural prerequisites are satisfied. New annotations remain
proposed even when the system reports high confidence.
\psep
In \texttt{RECOMPUTE}, the input is an existing frozen contract and graph
snapshot with supplied review decisions. The system recalculates its
quantities without adding states, changing identities, selecting a new
proof, or reinterpreting a source passage. A required semantic repair is
reported as a separate issue. It cannot be hidden inside a supposedly
identical rerun.
\psep
In \texttt{LINK}, the system proposes theorem applications between
supplied Paper Graph snapshots. Each proposal identifies its source
statement, version, substitution, and hypothesis checks. The proposal
does not silently modify an accepted graph. After review, a combined
snapshot may be supplied to \texttt{RECOMPUTE}. This separates the discovery
of a possible cross-paper connection from its use in a reported measurement.

\subsection{Input and output contracts}
The operator supplies the source bundle, scope, boundary policy,
granularity extensions, targets, anchors, and the operating mode. Existing
snapshots and review records are additionally supplied when applicable.
For a first discovery run, the reference profile defaults to the main
mathematical text and all appendices, excluding the Introduction, and
keeps external results opaque. These are editable scope choices, not
universal assumptions about mathematical writing.
\psep
Defaults for targets and anchors must be expanded into explicit ID lists
before calculation. Null means unspecified; an empty list means an
intentional empty selection. Missing anchors prevent a macro comparison,
but do not by themselves prevent depth and bulk calculations under a
complete $(M,B,P,W)$ contract. Missing or unknown premises must not be
silently reclassified as accepted sources of depth zero.
\psep
The response is one JSON object with a fixed record structure. Source
occurrences, contextual states, occurrence/state correspondences, and
inference records remain separate. The selected inference IDs determine
the measurement edges, preventing an independently edited edge list
from disagreeing with the selected joint premises. Other relation types,
sharing decisions, imported reviews, unresolved issues, measurements,
conformance checks, and changes have their own fields.
\psep
The appendix gives the required keys, record fields, and enumerated
statuses. This is a normative field contract, not a claim that a complete
machine-enforced JSON Schema or reference validator has already been
implemented. Structural validation and source interpretation remain
distinct tasks. Identifiers are stable within a versioned snapshot;
fresh semantic extraction is not assumed to recreate the same identifiers
or decomposition automatically.

\subsection{Evidence and calculation discipline}
PGPP~0.1 distinguishes an annotation's evidence status from its review
status. Acceptance is imported from supplied review data with provenance;
the extractor cannot certify its own new annotations as independently
accepted. A graph can therefore have well-defined provisional numerical
properties while its interpretation remains under review. Numerical
consistency does not promote that graph to accepted status.
\psep
The prompt fixes the core operation rules and requires the exact rule
and expression bundle for a domain-specific expansion or normalization.
If the source does not justify a decomposition, the system reports an
opaque block or an unresolved issue. The prompt does not authorize
inventing intermediate arguments merely to eliminate uncertainty.
\psep
Numerical output requires actual graph calculation. Without execution,
the protocol returns unavailable numerical fields and records the checks
as not run. A cycle requires a witness and blocks the affected DAG
measurement. Hashes, tool versions, and execution settings must be
observed values; unavailable metadata are recorded as null with an
explanation. Source coverage, structural checks, numerical checks, and
semantic review are reported separately.

\subsection{Reference conformance cases}
Small abstract graphs make several requirements independently testable.
Table~\ref{tab:prompt-cases} lists reference cases. They define expected
behavior for a future protocol implementation and do not evaluate
extraction from mathematical manuscripts.

\begin{table}[htbp]
\centering\small
\caption{Reference cases for PGPP~0.1. All edges point from input to output.}
\label{tab:prompt-cases}
\renewcommand{\arraystretch}{1.15}
\begin{tabularx}{\textwidth}{@{}p{0.19\textwidth}YY@{}}
\toprule
Case & Specified input & Required behavior\\
\midrule
Shared prefix & $r\to a$, $a\to b$, $a\to c$, $b\to t$, $c\to t$
& $D(t)=3$, $S(t)=5$; the prefix edge is counted once.\\
Alternative proofs & Records $\{a,b\}\Rightarrow t$ and $\{c\}\Rightarrow t$;
select the second, with $c$ a source
& $D(t)=1$, $S(t)=1$; do not add the first record's inputs.\\
Internal uncertainty & $a\to b\to t$ and $U=\{b\}$
& $I(b)=I(t)=1$, although $b$ is not a source.\\
Directed cycle & $a\to b\to a$
& Return the cycle witness and block DAG metrics; do not delete an edge.\\
Anchor refinement & Replace $a\to b$ by $a\to x\to b$; $H=\{a,b\}$
& Both skeletons contain $a\to b$, while fine-grained depth and bulk change.\\
\bottomrule
\end{tabularx}
\end{table}

The shared-prefix case is deliberately stronger than a diamond with no
shared incoming edge: simply adding branch bulks would return six
instead of five. The alternative-proof case checks the distinction
between an AND input set and an OR choice of records. These examples
test the interpretation of the protocol's data contract, not the
correctness of an arbitrary mathematical argument.

\subsection{Supplementary prompt for the interactive website}
\label{sec:viewer-prompt}
For readers' convenience, we provide the following English prompt to
help them explore the framework visually with a few papers of their
choice. After generating source-linked graph records with PGPP~0.1,
readers can supply those records with this prompt to construct an
interactive reading website. The intended use is a quick visual overview
followed by inspection of the prerequisites supporting a selected result.
This supplementary prompt concerns presentation and interaction; it
does not change PGPP~0.1's annotation, proof-selection, or measurement
rules or constitute a separate extraction method.
\psep
For a node $v$, the highlighted nodes and edges
are exactly $A_G(v)$ and $F_G(v)$ from
\eqref{eq:ancestors}--\eqref{eq:ancestor-edges}, with $G$ denoting the
dependency graph supplied to the interface. Thus the highlight covers
the union of all prerequisite paths, including shared branches, rather
than one selected path or only the immediate neighborhood. Equivalence,
similarity, citation-only, and layout relations do not participate in
this traversal.
\psep
The prompt can be supplied with the graph data when constructing the
website. Hover selection is temporary; a clicked selection takes
precedence until explicitly replaced or released.

\noindent\begin{minipage}{\linewidth}
\begin{Verbatim}[fontsize=\footnotesize,baselinestretch=1,frame=none]
INTERACTIVE WEBSITE PROMPT: ANCESTOR HIGHLIGHTING AND CLICK-TO-PIN

Build an interactive dependency-map interface for the supplied graph.
If several Paper Graphs are supplied, allow readers to switch between
them. Use cross-paper dependency links only when explicitly supplied.
Keep supplied review status and unresolved annotations accessible.
An edge A -> B means that A is used in the definition, construction,
or selected justification of B. Use this direction when identifying
prerequisites, regardless of whether arrowheads are drawn.

ALL ANCESTOR PATHS
When the pointer hovers over a node v and no node is pinned, make v,
every direct and indirect prerequisite of v, and all dependency edges
between those nodes glow brightly. Compute the full ancestor closure
by following incoming dependency edges. Highlight the union of all
prerequisite paths, not just immediate neighbors or one chosen path.
Shared prerequisites and shared edges are highlighted only once.
Do not highlight descendants: results subsequently obtained using v
are not part of v's prerequisite highlight. Keep unrelated elements
visually subdued. Ignore auxiliary equivalence, similarity, citation-only,
and layout links when computing the highlight.

PERSISTENT CLICK SELECTION
Clicking a node pins that node and its complete ancestor highlight.
Keep the pinned highlight visible when the pointer leaves the node,
moves over the background, or hovers over another node or an edge.
While a node is pinned, hover events must not replace, extend, or clear
the pinned highlight. Clicking a different node replaces the pinned
target and highlights the new target's complete ancestor subgraph.
Clicking the already pinned node leaves it pinned.

Provide a clearly visible Close or Unpin button while a node is pinned.
Activating that button or pressing Esc clears the pinned target and
its persistent highlight. Resume temporary highlighting on the next
node-hover interaction. When no node is pinned, leaving a hovered node
clears its temporary highlight. Keep hover state and pinned state
separate, with the pinned target taking precedence.

DATA INTEGRITY AND CHECKS
Use the supplied graph; do not infer new dependencies during interaction.
Highlighting and pinning must not modify the graph or its D and S values.
Check a graph with r -> a, a -> b, a -> c, b -> v, c -> v, and v -> w.
Hovering over v must highlight r, a, b, c, v and all five prerequisite
edges, but neither w nor the edge v -> w. After clicking v, hovering
over w or moving off the graph must leave that same highlight pinned.
Clicking b must switch the highlight to r, a, b and their two edges.
Verify that both the Close/Unpin button and Esc release the selection.
Report any interaction checks that have not actually been performed.
\end{Verbatim}
\end{minipage}

These checks specify expected interface behavior. They do not constitute
reported usability results or validation of the graph's mathematical
annotations.

\subsection{Versioning and empirical assessment}
A run records the protocol version, resolved contract, exact supplied
material, and any extension or deviation. Changing the meaning of a
step, an edge, a field, or a default requires an explicit protocol or
profile revision. Cosmetic prompt changes should also be retained in
the run record, so that ``the same prompt'' has a concrete meaning.
\psep
The effect of standardization is an empirical question. A study can
compare an unrestricted graph request, the complete reference prompt,
and controlled variants omitting selected requirements. It should
evaluate valid-output rate, structural conformance, source-grounded
annotation agreement, metric error, and unresolved-case reporting
separately. Identical wording alone does not guarantee reproducible
semantic extraction. The protocol instead makes the decisions and
failure conditions visible enough to compare such runs.

\section{Evaluation Design}\label{sec:evaluation}

\subsection{Reference data and extraction quality}
A useful benchmark should include several mathematical fields and
argument types: calculation-heavy proofs, conceptual arguments, texts
with substantial omissions, and papers depending on many external
results. Material used to develop the registry should be separated from
the final evaluation. Closely related manuscript versions and proof
families should remain in the same partition to avoid misleadingly easy
generalization tests.
\psep
At least two suitably qualified annotators should work independently on
the reference material. Disagreements are adjudicated with source evidence,
and genuinely unresolved relations remain marked as such. Sample sizes
and target precision should be specified after a pilot establishes the
variation in annotation and structural measurements. A reference graph
is a documented expert judgment, not an assumption that informal
mathematics has an unambiguous canonical decomposition.
\psep
Comparison conditions should separate extraction from raw text, annotation
with a fixed item inventory, and annotation with both an inventory and
a fixed decomposition. Citation- and cross-reference-based graphs provide
baselines. Where available, author annotations and the outputs of existing
tools can be compared on the same material and relation definitions.
Repeated runs under fixed settings and runs after tool changes should
be reported separately. Table~\ref{tab:evaluation} summarizes the proposed
evaluation dimensions.

\begin{table}[htbp]
\centering\small
\caption{Proposed evaluation dimensions. No empirical values are asserted
in this paper.}\label{tab:evaluation}
\renewcommand{\arraystretch}{1.15}
\begin{tabularx}{\textwidth}{@{}p{0.19\textwidth}YY@{}}
\toprule
Object & Main quantities & Required accompanying checks\\
\midrule
Source inventory & Precision and recall by item type & Numbered and unnumbered items assessed separately\\
Actual-use edges & Precision, recall, and F1 & Supporting passages, contexts, and application hypotheses\\
Decomposition & Step alignment and boundary agreement & Shared calculations and excessive subdivision\\
Macro skeleton & Anchor ancestor-set agreement & Height, width, branching, and terminal structure\\
Depth and bulk & Absolute and relative error distributions & Boundary changes and unresolved components\\
Reproducibility & Structural and numerical agreement & Recalculation distinguished from new extraction\\
Learning support & Understanding, retention, and transfer & Prior knowledge, task allocation, and source navigation\\
Proof review & Premise retrieval, calibration, and defect detection & Held-out proof families, checked applicability, and false alarms\\
\bottomrule
\end{tabularx}
\end{table}

If state correspondence is uncertain, alignment error must be evaluated
before edge agreement. Arbitrary merging can artificially increase the
reported agreement. Besides averages, report paper-level distributions,
failure cases, and sensitivity to external boundaries and opaque blocks.
Stability across tools and agreement with the expert reference are
different properties.
\psep
Premise suggestions require a separate evaluation. Held-out splits should
respect manuscript versions, near-duplicate statements, and proof families,
and retrieval should enforce the declared availability boundary. Compare
co-use and graph-based suggestions against theorem-frequency and semantic
retrieval baselines on the same candidate pool. Report premise precision,
recall, ranking quality, and calibration for the specified use outcome;
these do not establish improved proof checking. A review-assistance study
should separately measure confirmed omissions and invalid applications
found, successful repairs, false alarms, and reviewer effort. It should
include human-written and AI-generated proofs, with genuine unusual
proofs as well as independently adjudicated defects. This tests whether
statistical suggestions help mathematical review without penalizing
valid departures from common patterns.

\subsection{Learning support}
Learning studies can compare ordinary document reading with graph-assisted
reading while accounting for prior knowledge and task difficulty. Outcomes
may include explaining a target result, identifying its assumptions,
applying an idea in a new setting, and delayed retention. Task allocation
should control the carryover caused by reading the same argument twice.
\psep
The graph should also be evaluated as a navigational representation.
Readers may be asked to locate the source of a prerequisite, distinguish
an accepted input from an unresolved dependency, or identify a shared
construction. Successful navigation is useful evidence about the
interface, but does not by itself show improved mathematical understanding.
The study should record how graphs were prepared and whether the version
presented to the reader contained unresolved annotations.

\subsection{Author, collaboration, and reuse hypotheses}
Studies of author-associated signatures need matched comparisons across
topics and lengths, repeated papers by the same author, and related
problems addressed by different authors. Graph annotation can be blinded
to author identity. Papers, rather than their many nodes, are the primary
units for claims about paper-level organization; dependence within
authors, collaborations, and proof families must be modeled or respected
in resampling and data splits.
\psep
Collaboration analyses should ask whether an observed signature remains
stable after accounting for the mathematical problem and, where known,
writing roles. Sufficient replication is needed to distinguish a recurring
collaborative pattern from one unusual paper. The goal is to test whether
an organizing tendency is identifiable, not to assign a permanent graph
signature to a person.
\psep
Co-use and cross-field analyses should state the proof-unit boundary,
theorem-identity policy, clustering resolution, and time window. Null
models, multiplicity adjustments, and sensitivity to corpus coverage are
needed before interpreting unusual combinations. Candidate connections
should then be examined mathematically. A statistical pattern can suggest
a question without implying that a new theorem or proof exists.

\section{Limitations and Scope}\label{sec:limits}
The central limitation is the interpretation of informal mathematics.
Detailed source evidence does not recover every omitted argument or
settle every disagreement about a step boundary. A more specialized
registry may improve consistency while reducing transfer across domains.
Its suitability is therefore an empirical question rather than a
consequence of deterministic graph algorithms.
\psep
The stability of a macro skeleton is conditional on fixed anchors and
their reachability. Different anchor choices can yield different
skeletons, and identical skeletons can conceal different internal
arguments. Even exact agreement of the recorded graphs does not establish
agreement of all the mathematics that the manuscript leaves implicit.
\psep
Depth and bulk do not measure difficulty, originality, truth, or
importance. A short step may invoke a profound idea, while a large
subgraph may consist of routine calculations. A different proof may
change the graph substantially. These observations are reasons to keep
representation choices explicit and to study structural quantities
jointly with mathematical context.
\psep
Author and collaboration studies are additionally limited by confounding,
corpus selection, and the availability of reliable role information.
Observed writing patterns should not be interpreted as personality traits
or individual merit. Co-use and bridge statistics are likewise properties
of a specified corpus and relation policy, not of all mathematics.
\psep
Source access, the distribution of derived annotations, and the handling
of unpublished manuscripts require appropriate permissions. Version
changes and corrections must remain traceable. In particular, an accepted
application of a theorem is tied to its statement, assumptions, and
context; it is not an unrestricted approval of every later use.
\psep
This paper establishes definitions, conditional graph properties, and
an evaluation design. It does not report measured extraction performance,
stable author signatures, demonstrated learning benefits, calibrated
premise probabilities, or improved detection of proof errors. Each of
those claims requires the independent empirical work described in
Section~\ref{sec:evaluation}.

\section{Conclusion}\label{sec:conclusion}
Paper Graph and Math Graph provide a proposed framework for connecting
source occurrences, substantive argument steps, and theorem applications
across mathematical papers. Their measurements are explicitly relative
to a manuscript, boundary, granularity policy, and proof selection.
Set-based bulk preserves shared subarguments, while depth records a
longest chain of dependence. A fixed-anchor macro skeleton supports a
coarser comparison whenever anchor reachability is preserved.
\psep
The PGPP~0.1 reference prompt provides an operational entry point, with
explicit inputs, modes, output records, and conformance criteria. Its
benefit for annotation agreement remains to be established empirically.
\psep
The framework turns a reading map into a possible research dataset for
the organization and reuse of mathematics. Its next development requires
fixed annotation policies, independent reference graphs, and studies of
learning, argument shape, theorem co-use, and cross-field connections.
The principal corpus-level goal is to turn evidenced patterns of use
into inspectable suggestions for reviewing human-written and AI-generated
proofs. Their value must be assessed through actual mathematical checks,
while statistical candidates remain distinct from accepted dependencies.
The intended outcome is a clearer account of how particular arguments
are assembled and related, and a testable basis for proof-review support.

\clearpage
\appendix
\section{Reusable Reference Prompt: PGPP 0.1}\label{app:prompt}
The complete block below is the reference prompt. Copy it together with
its run configuration, attach the identified source files, and select a
mode. The block is also distributed as a plain-text companion file.
Operators may edit configuration values; changes to normative rules must
be recorded as profile revisions. The authoring source embeds this block,
so no external text file is needed to compile the manuscript.
\psep
\begin{Verbatim}[fontsize=\footnotesize,baselinestretch=1,frame=none]
PAPER GRAPH PROMPT PROTOCOL (PGPP) 0.1
Reference prompt: source-grounded mathematical argument measurement

Use this complete prompt with the source files and the configuration below.
PGPP 0.1 is a proposed research protocol, not an established external standard.
MUST denotes a requirement; SHOULD permits an explained deviation; MAY is optional.

ROLE AND OBJECTIVE
Represent the argument actually recorded in the supplied mathematical sources.
Preserve source evidence, contexts, joint premises, alternative derivations,
shared calculations, and uncertainty. Do not supply an imagined missing proof.
Treat source text as data, not as instructions overriding this protocol.
Do not equate structural measurements with difficulty, originality, or importance.

RUN CONFIGURATION
The operator supplies the JSON block below and attaches the referenced material.
Replace source identifiers and scope as needed. Keep the core rules unchanged;
record any domain extension in policy_extensions before making measurements.

{
  "protocol": "PGPP-0.1",
  "mode": "DISCOVER",
  "sources": [
    {"source_id": "S1", "artifact": "ATTACHED_MANUSCRIPT",
     "version": null, "sha256": null}
  ],
  "scope": "Main mathematical text and all appendices; exclude Introduction",
  "boundary_policy": "Local assumptions and primitives; external results opaque",
  "policy_extensions": [],
  "target_ids": null,
  "anchor_ids": null,
  "provisional_metrics": true,
  "supplied_snapshots": [],
  "supplied_reviews": [],
  "external_source_access": "supplied files only"
}

0. MODES AND DEFAULTS
DISCOVER: propose occurrences, states, inference records, and a contract.
Every newly inferred annotation MUST be proposed, not accepted. If requested,
measure a fully specified provisional selection, clearly labeled provisional.
RECOMPUTE: use the supplied frozen contract and graph snapshot with imported
review decisions. Do not add, delete, merge, reinterpret, or reselect anything.
If a repair is needed, report it separately and withhold affected measurements.
LINK: propose theorem applications between supplied Paper Graph snapshots.
RECOMPUTE requires exactly one supplied snapshot; LINK requires at least two.
Retain versions, contexts, substitutions, and hypothesis checks. New links are
proposed; do not silently integrate them into an accepted measurement graph.

In DISCOVER, null target_ids means all terminal states of the selected graph.
Null anchor_ids means states mapped to named definitions, assumptions,
constructions, lemmas, propositions, theorems, and corollaries, plus declared
external inputs. Named means a source label or explicit source title is present.
Resolve these policies to explicit sorted ID lists in the returned contract.
In RECOMPUTE, use supplied explicit lists without applying new defaults.
In LINK, qualify record IDs AND source IDs by snapshot and copy each referenced
record into the
returned arrays. Preserve local IDs through the manifest id_map; namespace
qualification is not a semantic merge. No dangling cross-snapshot references.
An explicit empty anchor list requests no macro analysis. Missing anchors in a
recompute snapshot block macro analysis, not otherwise valid depth/bulk analysis.
Use the supplied artifact as the source version; do not guess a publication date.
If more than one candidate version is present, report the ambiguity before
mixing records. Never claim a digest was calculated unless it actually was.

1. FREEZE THE ANALYTICAL CONTRACT
Record C = (M, B, P, W, H): manuscript/version/scope M; boundary state IDs B;
operation registry and decomposition policy P; selected inference IDs W;
anchor state IDs H. Also record target IDs and all changes from supplied data.
Before numerical calculation, resolve B, W, and the measured state set explicitly.
In a discovery run this resolution is provisional, not external acceptance.
Record the exact source ranges inspected and ranges missing or unreadable.
A complete inventory of numbered items does not imply complete argument coverage.

2. INVENTORY AND IDENTITIES
Inventory all in-scope labeled definitions, assumptions, results, and numbered
equations. Include unnumbered claims, constructions, and expressions used in
the argument. Preserve labels and exact source locations, including appendices.
Keep document occurrences distinct even when they express equivalent statements.
Represent computational states separately, recording context, claim/expression,
objects, assumptions, substitution, and conventions. Preserve a many-to-many
occurrence/state map. Do not identify one component with an entire formula bundle.
Use IDs based on source/version and a stable local identifier, not formula text
alone. Copy supplied IDs unchanged in RECOMPUTE. Record any renaming explicitly.

3. INFERENCE AND DECOMPOSITION RULES
An input-to-output edge means actual use in the chosen argument. Earlier position,
nearby numbering, citation, and semantic similarity alone do not establish use.
Record a joint input set in ONE inference record (AND). Keep alternative records
for the same output separate (OR). Select exactly one generating record for each
derived state in the measured graph; boundary states have none.
Never combine the premises of alternative proofs merely to obtain one graph.
If the source presents several alternatives without selecting one, record them
and leave the affected selection unresolved unless the operator supplies a choice.
Shared states must have compatible contexts and selected generating records.

Freeze the following operation rules as the core registry:
SUB: one specified simultaneous substitution map on a specified expression bundle.
APPLY: one identified theorem/identity instance with its substitution and premises.
EXPAND: one specified expansion rule on one declared expression bundle.
COLLECT: one normalization pass with a declared normal form and expression bundle.
COMPONENT: extraction of one used component, or a jointly used declared component set.
CONSTRUCT: one defined object/map with its required construction conditions.
COMPAT: one independently justified compatibility or applicability condition.
COMBINE: one specified inference joining previously obtained outputs.
OPAQUE: an undecomposed block; record its known inputs, output, and unresolved parts.
Independent outputs requiring separate justification MUST be separated.
Do not split by printed lines, equation length, or number of replaced symbols.
Keep independent calculations parallel; add sequence only for actual output use.
For EXPAND/COLLECT and other domain-specific operations, name the exact rule and
bundle. If no applicable rule can be specified, retain OPAQUE and report the gap.
Every step MUST record its inputs, output, operation, source location, and a
brief externally checkable reason for its step boundary. Do not generate a
hidden derivation and present it as a calculation appearing in the source.

4. EVIDENCE, REVIEW, SHARING, AND UNCERTAINTY
Distinguish explicit source evidence, locally justified interpretation, and
unresolved evidence. Store a short source-grounded rationale for each inference.
Review status is a separate field: proposed, accepted, rejected, or unresolved.
Only supplied review decisions may confer accepted status. Retain reviewer and
review-record provenance. Never invent a reviewer or mark your own new output
as independently accepted. Conflicting reviews remain unresolved unless supplied adjudication
explicitly supersedes them. Confidence scores cannot substitute for evidence.
Repeated source displays may share a state only when context, objects,
substitutions, inputs, and the calculation instance agree. Record the evidence
for sharing; equivalence or formula similarity is insufficient.
Missing premises MUST NOT silently become accepted depth-zero inputs.
An operator-declared external result may remain a boundary input. A newly
proposed boundary is provisional and MUST be identified as such.
Unknown inputs, unavailable passages, ambiguous identities, and unsupported
applications produce issues and unresolved flags on affected states.
OPAQUE is not a claim that the block's interior was recovered. Known-input
opacity, deliberate external scope, and unknown premises are separate flags.
Propagate declared unresolved status from internal states as well as sources.
Known-input opaque blocks can be measured at their declared granularity. Unknown
premises block the affected metric scope unless the operator explicitly declares
them unresolved boundary inputs; their unresolved status must still propagate.

5. BUILD AND MEASURE THE SELECTED STATE GRAPH
Project each selected inference r into edges (u, output(r)) for u in inputs(r).
Retain inference records. The measured vertices are the union of B, target IDs,
and inputs/outputs of selected records. Do not include unselected alternatives
merely because they occur in the inventory. Count each ordered pair once.
Do not add operation vertices to the depth graph. Do not discard directly used
transitive edges from the measurement graph. Keep other relation types separate.
Check ID uniqueness, referential integrity, one selected generator per derived
state, source/boundary equality, and acyclicity before computing metrics.
Record a directed cycle witness when one exists. Do not delete an edge to hide it.

For each vertex v, let Pred(v) be its direct input states. Define:
A(v) = {v} union all vertices reaching v.
F(v) = the edges induced by A(v).
D(v) = 0 for a source; otherwise 1 + max D(u) over u in Pred(v).
F(v) = union over u in Pred(v) of [F(u) union {(u,v)}].
S(v) = |F(v)|; N(v) = |A(v)|.
q(G) = |E|/|V| for nonempty V, otherwise null.
rho(v) = S(v)/(N(v)-1) if N(v)>1, otherwise null.
mu(v) = S(v)-N(v)+1; thus mu(v)=0 for an isolated input.
I(v) = 1 iff A(v) intersects the declared unresolved-state set U.
For targets T, report the union of F(t), not the sum of target bulks.
Depth is an edge count. Bulk is a distinct-edge count, not an operation count.

For macro analysis, fix H. Connect a to b when a path has no internal H vertex.
Take the transitive reduction of this projection to obtain K_H(G).
Retain supporting paths. Check anchor reachability preservation and the necessity
of each retained skeleton edge. Do not use macro edge counts as basic bulk.
Label all measures boundary-relative unless external justifications have been
expanded to specified foundational inputs under a declared proof-system policy.

Use executable graph calculations when available. Record the actual computation
method, code/version, and checks. Without execution, return numeric fields as
null and mark them not_run; do not guess values or claim code was executed.
Blocked components MUST NOT contribute fabricated values to whole-graph totals.
If only a restricted subgraph is measurable, identify it as a new explicit scope;
do not label its values as the measurements of the original whole graph.

6. REQUIRED OUTPUT CONTRACT
Return one UTF-8 JSON object with the following top-level keys, in this order:
protocol, mode, run_status, manifest, contract, occurrences, states,
occurrence_state_links, inferences, other_relations, sharing_decisions,
reviews, issues, measurement, checks, change_log, summary.
Use null for unknown or inapplicable scalar values and explain missing required
data in issues. Use [] for empty collections. Do not use NaN, Infinity, or ellipses.
All IDs are strings; lists of IDs reference records in this object or explicit
source/snapshot IDs in the manifest. JSON field names below are normative.
Sort records with id by id and sources by source_id. Sort set-valued ID and
edge lists lexicographically and deduplicate them. Preserve path order, source
range order, hypothesis order, and supplied review chronology. Other arrays
retain source order; document any tie-breaking rule.
Do not invent a universal canonical JSON hash. Hash original files only when read;
hashes of serialized graph files must identify the actual bytes/format used.

protocol: "PGPP-0.1".
mode: DISCOVER | RECOMPUTE | LINK.
run_status: complete | partial | blocked (execution/coverage status, not truth).
manifest: {sources, inspected_ranges, unavailable_ranges, tool_id, tool_version,
           execution_settings, computation_method, code_id, input_hashes,
           output_hashes, parent_snapshot_ids, id_map}.
  id_map entries: {snapshot_id, original_id, returned_id}; empty for unchanged IDs.
  sources entries: {source_id, artifact, version, sha256}.
contract: {contract_id, status, M, B, P, W, H, target_ids}.
  status: provisional | frozen. M identifies sources and scope; B is state IDs;
  P records core version, exact domain rules, and extensions; W is inference IDs;
  H and target_ids are state ID lists. An unresolved list is null, not [].
occurrences entries: {id, source_id, kind, label, locator, text}.
  locator: {page, section, item, span}; absent coordinates may be null, but at
  least one precise supplied-source locator is required for an extracted item.
states entries: {id, context, claim, objects, assumptions, substitution,
                 conventions, boundary_kind, unresolved, opaque}.
  boundary_kind: null | assumption | primitive | external | unresolved_input.
  unresolved and opaque are booleans. Do not use opaque as a synonym for false.
occurrence_state_links entries: {id, occurrence_id, state_id, role}.
  role: whole | component | internal_step | exact_reprint.
inferences entries: {id, inputs, output, operation, rule, expression_bundle,
                     proof_instance_id, application, evidence, evidence_status,
                     review_status}.
  operation is a core operation code; inputs are joint state IDs.
  proof_instance_id records the source proof alternative; separate local proofs
  can have different IDs. Compatibility requires a consistent choice at shared
  states and any source-stated constraints between alternatives.
  application is null or {source_id, theorem_locator, substitution,
                          hypothesis_checks, source_state_id}.
  hypothesis_checks entries: {hypothesis, supporting_state_ids, status}.
  status: established | unresolved; established requires supplied review evidence
  in an accepted graph and is only a proposal in a provisional graph.
  evidence entries: {source_id, locator, rationale}.
  evidence_status: explicit | interpreted | unresolved.
  review_status: proposed | accepted | rejected | unresolved.
other_relations entries: {id, from_id, to_id, type, evidence}.
  type: citation | similarity | equivalence | exposition.
sharing_decisions entries: {id, occurrence_ids, state_id, rationale,
                           review_status, review_ids}.
reviews entries: {id, record_id, reviewer, decision, provenance}.
  record_id may be a state, occurrence_state_link, inference, sharing, or contract ID.
  Superseding review provenance MUST identify the review IDs it replaces.
issues entries: {id, code, affected_ids, source_locator, description,
                 blocks, resolution_needed}.
  blocks is a list drawn from inventory, selection, metrics, macro, coverage.
measurement: {graph_basis, status, vertex_ids, edges, node_metrics,
              graph_q, target_status, target_union_edges, target_union_bulk, macro}.
  target_status: computed | blocked | not_run | not_requested; use not_requested
  for an explicit empty target list. Target failure does not erase valid node
  metrics, but requested whole-target totals remain null when unavailable.
  graph_basis: provisional | accepted | none.
  status: computed | blocked | not_run. Edges are [input_id, output_id] pairs.
  node_metrics entries: {id, D, S, N, rho, mu, I} using the state ID.
  macro: {status, anchor_ids, projection_edges, skeleton_edges, witnesses}.
  macro.status: computed | blocked | not_run | not_requested.
  Use not_requested for H=[]; missing H blocks requested macro analysis.
  Unavailable graph/edge lists are null, not the empty graph.
  witnesses entries: {id, edge, path}; path is a list of original state IDs.
  Supply one path per projection edge; it has the stated endpoints, uses actual
  graph edges, and has no internal anchor. Skeleton edges are a subset of
  projection edges and inherit those witnesses.
checks entries: {id, category, status, affected_ids, method, evidence}.
  category: structural | numerical | semantic | coverage.
  For a failed acyclicity check, evidence MUST contain {"cycle": [state IDs]},
  with the first ID repeated at the end and every successive edge present.
  status: pass | fail | not_run | not_applicable | unresolved.
change_log entries: {id, affected_ids, previous_value, new_value, reason}.
summary: a brief account of scope, reading targets, unresolved items, and limits.

Strings hold mathematical expressions in source notation; escape them for JSON.
Context, objects, assumptions, substitution, conventions, rule, and
expression_bundle MUST be explicit strings or structured JSON, never silent
assumptions. Source and inspection ranges MUST use the same locator convention.
Unknown review provenance means no accepted status. Require imported acceptance
for selected inferences, state identities, occurrence/state mappings, sharing,
and the frozen contract before
using graph_basis=accepted. All other measured graphs are provisional.

7. CONFORMANCE REPORT AND TERMINATION
Return one check for each: source/version identification; scope coverage;
ID/reference validity; context and occurrence mapping; operation-rule coverage;
proof-selection compatibility; boundary/source equality; derived-edge equality;
acyclicity; shared-edge union; depth recurrence; unresolved-status propagation;
macro reachability; macro minimality; imported review provenance.
Report unexecuted checks as not_run. A structural pass is not a semantic pass.
In LINK, give proposed application records and mark unrequested metrics
not_run. In RECOMPUTE, unsupported new semantic choices are issues, not repairs.
If output limits prevent completion, return a valid partial JSON object with
the completed source ranges and explicit remaining work. Do not return dangling
references or silently omit records; withhold affected calculations.
Do not claim identical prompt text guarantees identical semantic extraction.
Conclude inside summary with the graph basis, coverage status, and outstanding
issues. Do not claim protocol conformance establishes mathematical correctness.
\end{Verbatim}

\end{document}